  \theoremstyle{definition}
  \newtheorem{defn}{\protect\definitionname}
  \theoremstyle{remark}
  \newtheorem{rem}{\protect\remarkname}
\theoremstyle{plain}
\newtheorem{thm}{\protect\theoremname}
  \theoremstyle{plain}
  \newtheorem{cor}{\protect\corollaryname}
  \theoremstyle{plain}
  \newtheorem{lem}{\protect\lemmaname}
  \providecommand{\definitionname}{Definition}
  \providecommand{\lemmaname}{Lemma}
  \providecommand{\remarkname}{Remark}
\providecommand{\corollaryname}{Corollary}
\providecommand{\theoremname}{Theorem}
\begin{document}

\title{Learning Model-Based Sparsity\\
via Projected Gradient Descent }

\author{Sohail~Bahmani%
\thanks{S.B. is with the School of Electrical and Computer Engineering at
Georgia Institute of Technology.%
}, Petros~T.~Boufounos, \IEEEmembership{Member,~IEEE}%
\thanks{P.B. is with Mitsubishi Electric Research Labs.%
}, and Bhiksha~Raj %
\thanks{B.R. is with the Language Technologies Institute and the Department
of Electrical and Computer Engineering at Carnegie Mellon University.%
}}
\maketitle
\begin{abstract}
Several convex formulation methods have been proposed previously for
statistical estimation with structured sparsity as the prior. These
methods often require a carefully tuned regularization parameter,
often a cumbersome or heuristic exercise. Furthermore, the estimate
that these methods produce might not belong to the desired sparsity
model, albeit accurately approximating the true parameter. Therefore,
greedy-type algorithms could often be more desirable in estimating
structured-sparse parameters. So far, these greedy methods have mostly
focused on linear statistical models. In this paper we study the projected
gradient descent with non-convex structured-sparse parameter model
as the constraint set. Should the cost function have a Stable Model-Restricted
Hessian the algorithm produces an approximation for the desired minimizer.
As an example we elaborate on application of the main results to estimation
in Generalized Linear Models.\end{abstract}

\begin{IEEEkeywords}
Model-based sparsity, Estimation, Projected Gradient Descent, Stable
Model-Restricted Hessian
\end{IEEEkeywords}

\section{\label{sec:Introduction}Introduction}

In a variety of applications such as bioinformatics, medical imaging,
social networks, and astronomy there is a growing demand for computational
methods that perform statistical inference on high-dimensional data.
In the problems arising in these applications, $p$, the number of
predictors in each sample is much larger than $n$, the number of
observations. Although such problems are generally ill-posed, in many
cases the data has known underlying structure such as sparsity that
can be exploited to make the problem well-posed.

Beyond the ordinary, extensively studied, sparsity model, a variety
of structured sparsity models have been proposed in the literature
\cite{bach_consistency_2008,roth_group-lasso_2008,jacob_group_2009,baraniuk_model-based_2010,bach_structured_2010,jenatton_structured_2011,kyrillidis_combinatorial_2012,chandrasekaran_convex_2010}.
These sparsity models are designed to capture the interdependence
of the locations of the non-zero components that is known \emph{a
priori} in certain applications. The models proposed for structured
sparsity can be divided into two types. Models of the first type have
a combinatorial construction and explicitly enforce the permitted
``non-zero patterns'' \cite{baraniuk_model-based_2010,kyrillidis_combinatorial_2012}.
Greedy algorithms have been proposed for the least squares regression
with true parameters belonging to such combinatorial sparsity models
\cite{baraniuk_model-based_2010}. Models
of the second type capture sparsity patterns induced by the convex
penalty functions tailored for specific estimation problems. Typically,
such convex penalty functions are derived from convex relaxations
of the combinatorial model. For example, consistency of linear regression
with mixed $\ell_{1}$/$\ell_{2}$-norm regularization in estimation
of group sparse signals having non-overlapping groups is studied in
\cite{bach_consistency_2008}. Furthermore, a different convex penalty
to induce group sparsity with overlapping groups is proposed in \cite{jacob_group_2009}.
In \cite{bach_structured_2010}, using submodular functions and their
Lov\`{a}sz extension, a more general framework for design of convex penalties
that induce given sparsity patterns is proposed. In \cite{chandrasekaran_convex_2010}
a very general convex signal model is proposed that is generated by
a set of base signals called ``atoms''. The model can describe not
only plain and structured sparsity, but also low-rank matrices and
several other low-dimensional models. We refer readers to \cite{bach_structured_2011,duarte_structured_2011}
for extensive reviews on the estimation of signals with structured
sparsity.

In addition to linear regression problems under structured sparsity
assumptions, nonlinear statistical models have been studied in the
convex optimization framework \cite{roth_group-lasso_2008,bach_consistency_2008,jenatton_structured_2011,tewari_greedy_2011}.
For example, using the signal model introduced in \cite{chandrasekaran_convex_2010},
minimization of a convex function obeying a \emph{restricted smoothness
property} is studied in \cite{tewari_greedy_2011} where a coordinate-descent
type of algorithm is shown to converge to the minimizer at a sublinear
rate. In this formulation and other similar methods that rely on convex
relaxation one needs to choose a regularization parameter to guarantee
the desired statistical accuracy. However, choosing the appropriate
value of this parameter may be intractable. Furthermore, the convex
signal models usually provide an approximation of the ideal structures
the estimates should have, while in certain tasks such as variable
selection solutions are required to exhibit the exact structure considered.
Therefore, in such tasks, convex optimization techniques may yield
estimates that do not satisfy the desired structural properties ,
albeit accurately approximating the true parameter. These shortcomings
motivate application of combinatorial sparsity structures in nonlinear
statistical models, extending prior results such as \cite{baraniuk_model-based_2010}
that have focused exclusively on linear models.

Among the non-convex greedy algorithms, a generalization of Compressed
Sensing is considered in \cite{blumensath_compressed_2010} where
the measurement operator is a nonlinear map and the union of subspaces
is assumed as the signal model. This formulation, however, admits
only a limited class of objective functions that are described using
a norm. Furthermore, \cite{lozano_group_2011} proposes a generalization
of the Orthogonal Matching Pursuit algorithm \cite{pati_orthogonal_1993}
that is specifically designed for estimation of group sparse parameters
in Generalized Linear Models (GLMs). Also, \cite{beck_sparsity_2012}
studies the problem of minimizing a generic objective function subject
to plain sparsity constraint from the optimization perspective. Based
on certain necessary optimality conditions for the sparse minimizer,
some characterizations of sparse stationary points of the optimization
problem are proposed in \cite{beck_sparsity_2012}. Then a few iterative
algorithms are shown to converge to these stationary points, should
the objective satisfies certain conditions. In parallel to our work,
\cite{blumensath_compressed_2013} has examined a variation of this
problem, and provided similar results and guarantees. Specifically,
in \cite{blumensath_compressed_2013} the domain of the objective
function is allowed to be an infinite-dimensional Hilbert space, whereas
we only assume finite-dimensional Hilbert spaces. The sufficient conditions
introduced in\cite{blumensath_compressed_2013} is essentially equivalent
to our sufficient conditions, and both characterize the conditioning
of second-order derivatives of the objective when restricted to subspaces
of interest. The mentioned condition number controls the contraction
factor in iterations of the algorithm in both \cite{blumensath_compressed_2013}
and our work. However, to establish the convergence,  \cite{blumensath_compressed_2013}
requires the condition number to be smaller than $4/3$, which is
more stringent than our results that, depending on what is known about
the restricted second derivative, only require the condition number
to be smaller than $3$ or $3/2$. Furthermore, the accuracy of the
method in \cite{blumensath_compressed_2013} is only measured with
respect to the model-consistent minimizer and the corresponding approximation
error is expressed in terms of the value of the objective at certain
minimizers (with different model parameters).

In this paper we study the projected gradient descent method, also
a greedy algorithm, to approximate the minimizer of a cost function
subject to a model-based sparsity constraint. Our approach can be
applied to a broad set of problems, where the objective functions
are not limited to quadratic functions  or other norm-based functions
assumed in most of the previous studies. The algorithm is described
in Section \ref{sec:Problem=000026Algorithm}. The sparsity model
considered in this paper is similar to the models in \cite{baraniuk_model-based_2010}
with minor differences in the definitions. To guarantee the accuracy
of the algorithm our analysis requires the cost function to have a
Stable Model-Restricted Hessian (SMRH) as defined in Section \ref{sec:Main}.
Using this property  we can bound the distance of each iterate to
any given reference point in the considered model by the sum of two
terms. The first term, shrinks geometrically by the iterations, whereas
the second term is a fixed approximation error that depends on the
choice of the reference point. As an example, Section \ref{sec:Main}
considers the cost functions that arise in Generalized Linear Models
and discusses how the proposed sufficient condition (i.e., SMRH) can
be verified and how large the approximation error of the algorithm
is. To make precise statements on the SMRH and on the size of the
approximation error we assume some extra properties on the cost function
and/or the data distribution. Finally, we discuss and conclude in
Section \ref{sec:Discussion}.

\paragraph*{Notation.}

In the remainder of the paper we denote the positive part of a real
number $x$ by $\left(x\right)_{+}$. For a positive integer $k$,
the set $\left\{ 1,2,\ldots,k\right\} $ is denoted by $\left[k\right]$.
Vectors and matrices are denoted by boldface characters and sets by
calligraphic letters. The support set (i.e., the set of non-zero coordinates)
of a vector $\vc{x}$ is denoted by $\supp\left(\vc{x}\right)$. Restriction
of a $p$-dimensional vector $\vcg{v}$ to its entries corresponding
to an index set $\st{I}\subseteq\left[p\right]$ is denoted by $\res{\vc{v}}_{\st{I}}$.
Similarly $\mx{A}_{\st{I}}$ denotes the restriction of a matrix $\mx{A}$
to the rows enumerated by $\st{I}$. For square matrices $\mx{A}$
and $\mx{B}$ we write $\mx{B}\preccurlyeq\mx{A}$ to state that $\mx{A}-\mx{B}$
is positive semidefinite. We denote the power set of a set $\st{A}$
as $2^{\st{A}}$. For two non-empty families of sets $\st{F}_{1}$
and $\st{F}_{2}$ we write $\st{F}_{1}\Cup\st{F}_{2}$ to denote another
family of sets given by $\left\{ \st{X}_{1}\cup\st{X}_{2}\mid\st{X}_{1}\in\st{F}_{1}\text{ and }\st{X}_{2}\in\st{F}_{2}\right\} $.
Moreover, for any non-empty family of sets $\st{F}$ for conciseness
we set $\st{F}^{j}=\st{F}\Cup\ldots\Cup\st{F}$ where the operation
$\Cup$ is performed $j-1$ times. The inner product associated with
a Hilbert space $\st{H}$ is written as $\left\langle \cdot,\cdot\right\rangle $.
The norm induced by this inner product is denoted by $\norm{\cdot}$.
We use $\nabla f\left(\cdot\right)$ and $\nabla^{2}f\left(\cdot\right)$
to denote the gradient and the Hessian of a twice continuously differentiable
function $f:\st{H}\mapsto\mathbb{R}$. For an index set $\st{I}\subset\left[p\right]$
with $p=\dim\left(\st{H}\right)$, the restriction of the gradient
to the entries selected by $\st{I}$ and the restriction of the Hessian
to the entries selected by $\st{I}\times\st{I}$ are denoted by $\nabla_{\st{I}}f\left(\cdot\right)$
and $\nabla_{\st{I}}^{2}f\left(\cdot\right)$, respectively. Finally,
numerical superscripts within parentheses denote the iteration index.

\section{\label{sec:Problem=000026Algorithm}Problem Statement and Algorithm}

To formulate the problem of minimizing a cost function subject to
structured sparsity constraints, first we provide a definition of
the sparsity model. This definition is an alternative way of describing
the \emph{Combinatorial Sparse Models} in \cite{kyrillidis_combinatorial_2012}.
In comparison, our definition merely emphasizes the role of a family
of index sets as a \emph{generator} of the sparsity model.
\begin{defn}
\label{def:Model}Suppose that $p$ and $k$ are two positive integers
with $k\ll p$. Furthermore, denote by $\st{C}_{k}$ a family of some
non-empty subsets of $\left[p\right]$ that have cardinality at most
$k$. The set $\bigcup_{\st{S}\in\st{C}_{k}}2^{\st{S}}$ is called
a sparsity model of order $k$ generated by $\st{C}_{k}$ and denoted
by $\st{M}\left(\st{C}_{k}\right)$.\end{defn}
\begin{rem}
Note that if a set $\st{S}\in\st{C}_{k}$ is a subset of another set
in $\st{C}_{k}$, then the same sparsity model can still be generated
after removing $\st{S}$ from $\st{C}_{k}$ (i.e., $\st{M}\left(\st{C}_{k}\right)=\st{M}\left(\st{C}_{k}\backslash\left\{ \st{S}\right\} \right)$).
Thus, we can assume that there is no pair of distinct sets in $\st{C}_{k}$
that one is a subset of the other.
\end{rem}
In this paper we aim to approximate the solution to the optimization
problem 
\begin{alignat}{1}
\argmin_{\vcg{\uptheta}\in\st{H}}f\left(\vcg{\uptheta}\right)\quad\text{s.t. }\supp\left(\vcg{\uptheta}\right)\in\st{M}\left(\st{C}_{k}\right),\label{eq:ModelConsOpt}
\end{alignat}
 where $f:\st{H}\mapsto\mathbb{R}$ is a cost function with $\st{H}$
being a $p$-dimensional real Hilbert space, and $\st{M}\left(\st{C}_{k}\right)$
a given sparsity model described by Def. \ref{def:Model}.
\begin{rem}
In the context of statistical estimation, the cost function $f\left(\cdot\right)$
is usually the empirical loss associated with some observations generated
by an underlying true parameter $\vcg{\uptheta}^{\star}$. In these
problems, it is more desired to estimate $\vcg{\uptheta}^{\star}$
as it describes the data. The analysis presented in this paper allows
evaluating the approximation error of the proposed algorithm with
respect to any parameter vector in the considered sparsity model including any solution $\widehat{\vcg{\uptheta}}$ to (\ref{eq:ModelConsOpt}) as well as the statistical truth $\vcg{\uptheta}^{\star}$. However,
the approximation error with respect to $\vcg{\uptheta}^{\star}$
can be simplified and interpreted to a greater extent. We elaborate
more on this in Section \ref{sec:Main}.
\end{rem}
To approximate a solution $\widehat{\vcg{\uptheta}}$ to (\ref{eq:ModelConsOpt})
we use a \emph{projected gradient descent} method summarized in Algorithm
\ref{alg:GDMS}. The only difference between Algorithm \ref{alg:GDMS}
and standard projected gradient descent methods studied in convex
optimization literature is that the projection, in line \ref{lin:Proj},
is performed onto the generally non-convex set $\st{M}\left(\st{C}_{k}\right)$.
The projection operator $\mathrm{P}_{\st{C}_{k},r}:\st{H}\mapsto\st{H}$
at any given point $\vcg{\uptheta}_{0}\in\st{H}$ is defined as a
solution to 
\begin{alignat}{1}
\argmin_{\vcg{\uptheta}\in\st{H}}\norm{\vcg{\uptheta}-\vcg{\uptheta}_{0}}\quad\text{s.t. }\supp\left(\vcg{\uptheta}\right)\in\st{M}\left(\st{C}_{k}\right)\text{ and }\norm{\vcg{\uptheta}}\leq r.\label{eq:Proj}
\end{alignat}

{\centering
\begin{algorithm2e}[t]
	\DontPrintSemicolon
	\caption{Gradient Descent with Model Sparsity Constraint\label{alg:GDMS}}
	\SetKwInOut{Input}{input}
	\SetKwInOut{Output}{output}
	\Input{$\st{C}_k$, the family of possible supports,\\ $r$, the radius of feasible set}	
	$i\longleftarrow 0$ , $\vcg{\uptheta}\itr{i}\longleftarrow \vc{0}$\;
	\Repeat{halting condition holds}{
\nl	Choose step-size $\eta\itr{i}>0$\;
\nl\label{lin:Descent}	$\vcg{\upchi}\itr{i}\longleftarrow\vcg{\uptheta}\itr{i}-\eta\itr{i}\nabla f\left(\vcg{\uptheta}\itr{i}\right)$\;
\nl\label{lin:Proj}	$\vcg{\uptheta}\itr{i+1}\longleftarrow\mathrm{P}_{\st{C}_k,r}\left[\vcg{\upchi}\itr{i}\right]$\;
\nl	$i\longleftarrow i+1$\;
	}
	\Return{$\vcg{\uptheta}\itr{i}$}
\end{algorithm2e}}

\begin{rem}
One may also question the necessity of the constraint $\norm{\vcg{\uptheta}}\leq r$
in (\ref{eq:Proj}). As discussed later in Section \ref{sec:Example},
in statistical estimation problems where the cost function is not
quadratic the sufficient condition we rely on cannot be guaranteed
to hold unless the iterates and the true parameter lie in a bounded
set. This shortcoming is typical for convergence proofs that use similar
types of conditions (cf. \cite{bunea_honest_2008,negahban_unified_2009,agarwal_fast_2010,kakade_learning_2010}).
Finally, the exact projection onto the sparsity model $\st{M}\left(\st{C}_{k}\right)$
might not be tractable. 

Existence and complexity of algorithms that find the desired exact
or approximate projections, disregarding the length constraint in
(\ref{eq:Proj}) (i.e., $\mathrm{\mathrm{P}_{\st{C}_{k},+\infty}\left[\cdot\right]}$),
are studied in \cite{kyrillidis_combinatorial_2012}
for several interesting sparsity models. Furthermore, such projections
are known and tractable for signals with block-sparse support or support
that satisfies a tree model \cite{baraniuk_model-based_2010}. Often,
one may also desire to show that accuracy can be guaranteed even using
an inexact projection operator, at the cost of an extra error term.
For example, it is recently shown in \cite{hegde_approximation-tolerant_2014}
how to extend the framework of model-based compressed sensing to admit
inexact projections by assuming ``head'' and ``tail'' oracles
for the projections. In other cases, such as the co-sparse analysis
signal model, such projections are assumed but not theoretically backed\cite{B_TIT_11,DNW_TIT13,GNEGD_LAA14}.
Also, in the general case where $r<+\infty$, one can derive a projection
$\mathrm{P}_{\st{C}_{k},r}\left[\vcg{\uptheta}\right]$ from $\mathrm{P}_{\st{C}_{k},+\infty}\left[\vcg{\uptheta}\right]$
(see Lemma \ref{lem:BoundProj} in the Appendix). 

It is straightforward to generalize the guarantees in this paper to
cases where only approximate projection is tractable. However, we
do not attempt it here; our focus is to study the algorithm when the
cost function is not necessarily quadratic. Instead, we apply the
results to statistical estimation problems with non-linear models
and we derive bounds on the statistical error of the estimate.
\end{rem}

\section{\label{sec:Main}Theoretical Analysis}

\subsection{Stable Model-Restricted Hessian}

In order to demonstrate accuracy of estimates obtained using Algorithm
\ref{alg:GDMS} we require a variant of the \emph{Stable Restricted
Hessian} (SRH) condition proposed in \cite{bahmani_GraSP_2012} to
hold. The SRH condition basically characterizes cost functions that
have bounded curvature over canonical sparse subspaces. In this paper
we require this condition to hold merely for the signals that belong
to the considered model. Furthermore, we explicitly bound the length
of the vectors at which the condition should hold. As will be discussed
later, this restriction is necessary in general for non-quadratic
cost functions. The condition we rely on, the Stable Model-Restricted
Hessian (SMRH), can be formally defined as follows.
\begin{defn}
\label{def:SMRH}Let $f:\st{H}\mapsto\mathbb{R}$ be a twice continuously
differentiable function. Furthermore, let $\alpha_{\st{C}_{k}}$ and
$\beta_{\st{C}_{k}}$ be in turn the largest and smallest positive
real numbers such that 
\begin{alignat}{2}
\beta_{\st{C}_{k}}\norm{\vcg{\Delta}}^{2} & \leq\left\langle \vcg{\Delta},\nabla^{2}f\left(\vcg{\uptheta}\right)\vcg{\Delta}\right\rangle  & \leq\alpha_{\st{C}_{k}}\norm{\vcg{\Delta}}^{2},\label{eq:SRH}
\end{alignat}
 holds for all $\vcg{\Delta}$ and $\vcg{\uptheta}$ such that $\supp\left(\vcg{\Delta}\right)\cup\supp\left(\vcg{\uptheta}\right)\in\st{M}\left(\st{C}_{k}\right)$
and $\norm{\vcg{\uptheta}}\leq r$. Then $f$ is said to have a Stable
Model-Restricted Hessian (SMRH) with respect to the model $\st{M}\left(\st{C}_{k}\right)$
with constants $\alpha_{\st{C}_{k}}$ and $\beta_{\st{C}_{k}}$ in
a sphere of radius $r>0$, or in short ($\alpha_{\st{C}_{k}}$, $\beta_{\st{C}_{k}}$,
$r$)-SMRH. The conditioning for this SMRH is also denoted by $\mu_{\st{C}_{k}}:=\alpha_{\st{C}_{k}}/\beta_{\st{C}_{k}}$.
\end{defn}
Consider the special case of $f\left(\vcg{\uptheta}\right)=\frac{1}{2}\norm{\mx{X}\vcg{\uptheta}-\vc{y}}_{2}^{2}$
as in compressed sensing or sparse linear regression. It is straightforward
to see that the SMRH effectively reduces to the model-based restricted
isometry property by setting $\nabla^{2}f\left(\vcg{\uptheta}\right)=\mx{X}\tran\mx{X}$
in the definition of the SMRH. The model-based restricted isometry
constant $\delta_{\st{C}_{k}}$ and the SMRH constants are related
in this special case via $\beta_{\st{C}_{k}}\geq1-\delta_{\st{C}_{k}}$,
$\alpha_{\st{C}_{k}}\leq1+\delta_{\st{C}_{k}}$, and $\mu_{\st{C}_{k}}\leq\left(1+\delta_{\st{C}_{k}}\right)/\left(1-\delta_{\st{C}_{k}}\right)$.
\begin{rem}
Typically in parametric estimation problems a sample loss function
$\ell\left(\vcg{\uptheta},\vc{x},y\right)$ is associated with the covariate-response
pair $\left(\vc{x},y\right)$ and a parameter $\vcg{\uptheta}$. Given
$n$ iid observations the empirical loss is formulated as $\widehat{L}_{n}\left(\vcg{\uptheta}\right)=\frac{1}{n}\sum_{i=1}^{n}\ell\left(\vcg{\uptheta},\vc{x}_{i},y_{i}\right)$.
The estimator under study is the minimizer of the empirical loss,
perhaps considering an extra regularization or constraint for the
parameter $\vcg{\uptheta}$. To prove accuracy of sparse estimation
algorithms it is often required that the cost function is strongly
convex/smooth over a restricted set of directions as a sufficient
condition. It is known, however, that $\widehat{L}_{n}\left(\vcg{\uptheta}\right)$
as an empirical process is a good approximation of the expected loss
$L\left(\vcg{\uptheta}\right)=\mathbb{E}\left[\ell\left(\vcg{\uptheta},\vc{x},y\right)\right]$
(see \cite{van_de_geer_empirical_2000} and \cite[Chapter 5]{vapnik_statistical_1998}).
If $L\left(\vcg{\uptheta}\right)$ does not satisfy the desired restricted
strong convexity/smoothness conditions globally for all choices of
the true parameter $\vcg{\uptheta}^{\star}$ that have the structured
sparsity, then in general $\widehat{L}_{n}\left(\vcg{\uptheta}\right)$
does not satisfy the desired conditions globally, either. Thus, as
also assumed in the prior work either explicitly \cite{bunea_honest_2008}
or implicitly \cite{negahban_unified_2009,kakade_learning_2010,agarwal_fast_2010},
for a generic sample loss it is only possible to guarantee these types
of sufficient conditions if the set of valid vectors $\vcg{\uptheta}^{\star}$
are further restricted, e.g., by bounding their length. This is the
motivation behind the restriction imposed on the length of $\vcg{\uptheta}$
in Def. \ref{def:SMRH}. Of course, if the true parameter violates
this restriction we may incur an estimation bias as quantified in
Theorem \ref{thm:IterInvar}.
\end{rem}
The SMRH is similar to other conditions such as SRH \cite{bahmani_GraSP_2012}
and various forms of Restricted Strong Convexity/Smoothness (RSC/RSS)
(e.g., \cite{negahban_unified_2009} and \cite{blumensath_compressed_2013}):
they all impose quadratic bounds on the second derivative of the objective
function when restricted to sparse or model-sparse vectors. However,
there are some subtle differences. The SRH is defined for plain sparse
vectors and its quadratic bounds are defined locally. For the SMRH,
however, the fact that the boundedness is incorporated in the signal
model allowed us to define the quadratic bounds globally. The RSC
defined in \cite{blumensath_compressed_2013} is more general than
the SMRH since it assumes infinite-dimensional Hilbert spaces as the
domain of the function, whereas in SMRH we consider function defined
over finite-dimensional Hilbert spaces. However, the accuracy analysis
of \cite{blumensath_compressed_2013} guarantees convergence of the
projected gradient descent $\mu_{\st{C}_{k}^{3}}<4/3$, whereas ,
as will be shown by Corollary \ref{cor:FixedStep}, we can prove convergence
of the algorithm for $\mu_{\st{C}_{k}^{3}}<3/2$ or even $\mu_{\st{C}_{k}^{3}}<3$.

\subsection{Accuracy Guarantee}

Recall that in our notation $\st{C}_{k}^{2}=\st{C}_{k}\Cup\st{C}_{k}$
and $\st{C}_{k}^{3}=\st{C}_{k}\Cup\st{C}_{k}\Cup\st{C}_{k}$. Intuitively,
$\st{C}_{k}^{2}$ and $\st{C}_{k}^{3}$ can describe all possible
support sets of the sum of two or three vectors in $\st{M}\left(\st{C}_{k}\right)$,
respectively. Using the notion of SMRH we can now state the main theorem.
\begin{thm}
\label{thm:IterInvar}Consider the sparsity model $\st{M}\left(\st{C}_{k}\right)$
for some $k\in\mathbb{N}$ and a cost function $f:\st{H}\mapsto\mathbb{R}$
that satisfies the $\left(\alpha_{\st{C}_{k}^{3}},\beta_{\st{C}_{k}^{3}},r\right)$-SMRH
condition as in (\ref{eq:SRH}) with $\mu_{\st{C}_{k}^{3}}:=\alpha_{\st{C}_{k}^{3}}/\beta_{\st{C}_{k}^{3}}$.
If $\eta^{\star}=2/\left(\alpha_{\st{C}_{k}^{3}}+\beta_{\st{C}_{k}^{3}}\right)$
then for any $\overline{\vcg{\uptheta}}\in\st{M}\left(\st{C}_{k}\right)$
with $\norm{\overline{\vcg{\uptheta}}}\leq r$ the iterates of Algorithm
\ref{alg:GDMS} obey 
\begin{alignat}{1}
\norm{\vcg{\uptheta}\itr{i+1}-\overline{\vcg{\uptheta}}} & \leq2\gamma\itr{i}\norm{\vcg{\uptheta}\itr{i}-\overline{\vcg{\uptheta}}}+2\eta\itr{i}\norm{\nabla_{\overline{\st{I}}}f\left(\overline{\vcg{\uptheta}}\right)},\label{eq:MainTheorem}
\end{alignat}
 where $\gamma\itr{i}=\frac{\eta\itr{i}}{\eta^{\star}}\frac{\mu_{\st{C}_{k}^{3}}-1}{\mu_{\st{C}_{k}^{3}}+1}+\left|\frac{\eta\itr{i}}{\eta^{\star}}-1\right|$
and $\overline{\st{I}}=\supp\left(\mathrm{P}_{\st{C}_{k}^{2},r}\left[\nabla f\left(\overline{\vcg{\uptheta}}\right)\right]\right)$.
\end{thm}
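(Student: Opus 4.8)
The plan is to reduce one iteration of Alg.~\ref{alg:GDMS} to (\ref{eq:MainTheorem}) by first stripping off the projection, then first-order expanding $\nabla f$ about $\overline{\vcg{\uptheta}}$, and finally feeding the result into the curvature bounds of the SMRH. Throughout set $\st{S}\itr{i}=\supp\left(\vcg{\uptheta}\itr{i}\right)$, $\st{S}\itr{i+1}=\supp\left(\vcg{\uptheta}\itr{i+1}\right)$ and $\overline{\st{S}}=\supp\left(\overline{\vcg{\uptheta}}\right)$, all of which lie in $\st{M}\left(\st{C}_{k}\right)$ by construction, and put $\st{J}=\st{S}\itr{i+1}\cup\overline{\st{S}}\in\st{M}\left(\st{C}_{k}^{2}\right)$ and $\st{I}=\st{S}\itr{i}\cup\st{J}\in\st{M}\left(\st{C}_{k}^{3}\right)$; since $\vcg{\uptheta}\itr{0}=\vc{0}$ and every later iterate is an output of $\mathrm{P}_{\st{C}_{k},r}$ we also have $\norm{\vcg{\uptheta}\itr{i}}\leq r$. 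Because $\overline{\vcg{\uptheta}}$ is feasible for the projection (\ref{eq:Proj}) at $\vcg{\uptheta}_{0}=\vcg{\upchi}\itr{i}$, optimality of $\vcg{\uptheta}\itr{i+1}=\mathrm{P}_{\st{C}_{k},r}\left[\vcg{\upchi}\itr{i}\right]$ gives $\norm{\vcg{\uptheta}\itr{i+1}-\vcg{\upchi}\itr{i}}\leq\norm{\overline{\vcg{\uptheta}}-\vcg{\upchi}\itr{i}}$; expanding the squares, cancelling $\norm{\vcg{\upchi}\itr{i}}^{2}$, and rearranging yields $\norm{\vcg{\uptheta}\itr{i+1}-\overline{\vcg{\uptheta}}}^{2}\leq2\left\langle \vcg{\uptheta}\itr{i+1}-\overline{\vcg{\uptheta}},\vcg{\upchi}\itr{i}-\overline{\vcg{\uptheta}}\right\rangle $, and since $\vcg{\uptheta}\itr{i+1}-\overline{\vcg{\uptheta}}$ is supported on $\st{J}$, Cauchy--Schwarz gives $\norm{\vcg{\uptheta}\itr{i+1}-\overline{\vcg{\uptheta}}}\leq2\norm{\res{\left(\vcg{\upchi}\itr{i}-\overline{\vcg{\uptheta}}\right)}_{\st{J}}}$ (trivial when the left side vanishes).

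Next I would linearize the gradient. Since $\vcg{\upchi}\itr{i}=\vcg{\uptheta}\itr{i}-\eta\itr{i}\nabla f\left(\vcg{\uptheta}\itr{i}\right)$ and $\vcg{\uptheta}\itr{i}-\overline{\vcg{\uptheta}}$ is supported on $\st{I}$, the fundamental theorem of calculus applied to $\nabla f$ gives $\nabla_{\st{I}}f\left(\vcg{\uptheta}\itr{i}\right)-\nabla_{\st{I}}f\left(\overline{\vcg{\uptheta}}\right)=\mx{H}\left(\vcg{\uptheta}\itr{i}-\overline{\vcg{\uptheta}}\right)$, where $\mx{H}$ is the $\st{I}\times\st{I}$ block of $\int_{0}^{1}\nabla^{2}f\left(\overline{\vcg{\uptheta}}+t\left(\vcg{\uptheta}\itr{i}-\overline{\vcg{\uptheta}}\right)\right)\mathrm{d}t$ (a symmetric matrix acting on the $\st{I}$-coordinates, and $\mx{I}$ below is the corresponding identity). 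Hence $\res{\left(\vcg{\upchi}\itr{i}-\overline{\vcg{\uptheta}}\right)}_{\st{I}}=\left(\mx{I}-\eta\itr{i}\mx{H}\right)\left(\vcg{\uptheta}\itr{i}-\overline{\vcg{\uptheta}}\right)-\eta\itr{i}\nabla_{\st{I}}f\left(\overline{\vcg{\uptheta}}\right)$, and restricting further to $\st{J}\subseteq\st{I}$ --- which only shrinks norms and leaves $\nabla_{\st{J}}f\left(\overline{\vcg{\uptheta}}\right)$ --- gives $\norm{\res{\left(\vcg{\upchi}\itr{i}-\overline{\vcg{\uptheta}}\right)}_{\st{J}}}\leq\norm{\left(\mx{I}-\eta\itr{i}\mx{H}\right)\left(\vcg{\uptheta}\itr{i}-\overline{\vcg{\uptheta}}\right)}+\eta\itr{i}\norm{\nabla_{\st{J}}f\left(\overline{\vcg{\uptheta}}\right)}$.

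For the first summand I would use the SMRH: every point $\overline{\vcg{\uptheta}}+t\left(\vcg{\uptheta}\itr{i}-\overline{\vcg{\uptheta}}\right)$, $t\in\left[0,1\right]$, is a convex combination of $\overline{\vcg{\uptheta}}$ and $\vcg{\uptheta}\itr{i}$, hence has norm $\leq r$ and support inside $\st{I}\in\st{M}\left(\st{C}_{k}^{3}\right)$, so the $\left(\mu_{\st{C}_{k}^{3}},r\right)$-SMRH bound (\ref{eq:SRH}) applies to the integrand along every $\st{I}$-supported direction; therefore $\mx{H}$ has spectrum in $\left[\beta_{\st{C}_{k}^{3}},\alpha_{\st{C}_{k}^{3}}\right]$ and $\mx{I}-\eta\itr{i}\mx{H}$ has spectrum in $\left[1-\eta\itr{i}\alpha_{\st{C}_{k}^{3}},1-\eta\itr{i}\beta_{\st{C}_{k}^{3}}\right]$. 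Using $\eta^{\star}=2/\left(\alpha_{\st{C}_{k}^{3}}+\beta_{\st{C}_{k}^{3}}\right)$ and $\mu_{\st{C}_{k}^{3}}=\alpha_{\st{C}_{k}^{3}}/\beta_{\st{C}_{k}^{3}}$, one checks that $1-\eta\itr{i}\alpha_{\st{C}_{k}^{3}}=\left(1-\tfrac{\eta\itr{i}}{\eta^{\star}}\right)-\tfrac{\eta\itr{i}}{\eta^{\star}}\tfrac{\mu_{\st{C}_{k}^{3}}-1}{\mu_{\st{C}_{k}^{3}}+1}$ and $1-\eta\itr{i}\beta_{\st{C}_{k}^{3}}=\left(1-\tfrac{\eta\itr{i}}{\eta^{\star}}\right)+\tfrac{\eta\itr{i}}{\eta^{\star}}\tfrac{\mu_{\st{C}_{k}^{3}}-1}{\mu_{\st{C}_{k}^{3}}+1}$, so the triangle inequality bounds the operator norm of $\mx{I}-\eta\itr{i}\mx{H}$ by $\gamma\itr{i}$ and hence $\norm{\left(\mx{I}-\eta\itr{i}\mx{H}\right)\left(\vcg{\uptheta}\itr{i}-\overline{\vcg{\uptheta}}\right)}\leq\gamma\itr{i}\norm{\vcg{\uptheta}\itr{i}-\overline{\vcg{\uptheta}}}$. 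For the second summand, $\overline{\st{I}}=\supp\left(\mathrm{P}_{\st{C}_{k}^{2},r}\left[\nabla f\left(\overline{\vcg{\uptheta}}\right)\right]\right)$ is, by Lemma~\ref{lem:BoundProj} (equivalently, by inspecting (\ref{eq:Proj})), a support in $\st{M}\left(\st{C}_{k}^{2}\right)$ that maximizes $\norm{\res{\nabla f\left(\overline{\vcg{\uptheta}}\right)}_{\st{T}}}$ over $\st{T}\in\st{M}\left(\st{C}_{k}^{2}\right)$, and since $\st{J}\in\st{M}\left(\st{C}_{k}^{2}\right)$ this gives $\norm{\nabla_{\st{J}}f\left(\overline{\vcg{\uptheta}}\right)}\leq\norm{\nabla_{\overline{\st{I}}}f\left(\overline{\vcg{\uptheta}}\right)}$. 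Chaining the three displayed inequalities yields (\ref{eq:MainTheorem}).

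I expect the main obstacle to be the support bookkeeping: the contraction factor needs the curvature estimate on the $3k$-support $\st{I}=\st{S}\itr{i}\cup\st{S}\itr{i+1}\cup\overline{\st{S}}$, because the Taylor points interpolate $\vcg{\uptheta}\itr{i}$ and $\overline{\vcg{\uptheta}}$, whereas the residual must be confined to the $2k$-support $\st{J}=\st{S}\itr{i+1}\cup\overline{\st{S}}$ so that it is dominated by $\nabla_{\overline{\st{I}}}f\left(\overline{\vcg{\uptheta}}\right)$ without losing a constant --- arranging the restriction to $\st{J}$ so that it survives while $\mx{H}$ still acts on the larger $\st{I}$ is the delicate step. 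The step-size identities above and the energy-maximality of $\overline{\st{I}}$ are then routine; one should, however, double-check that the hypothesis's SMRH (stated for $\st{C}_{k}^{3}$) is exactly the one invoked, that the intermediate Taylor points genuinely satisfy the norm constraint so (\ref{eq:SRH}) applies, and that the length constraint $\norm{\cdot}\leq r$ in the projection does not alter the support $\overline{\st{I}}$, which is what Lemma~\ref{lem:BoundProj} is for.
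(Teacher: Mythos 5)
Your proposal is correct: every step checks out, including the support bookkeeping ($\st{I}\in\st{M}(\st{C}_{k}^{3})$ for the curvature bound, $\st{J}\in\st{M}(\st{C}_{k}^{2})$ for the gradient term), the norm constraint on the interpolated points via convexity of the ball of radius $r$, and the algebraic identities showing $\max\left\{ \left|1-\eta\itr{i}\alpha_{\st{C}_{k}^{3}}\right|,\left|1-\eta\itr{i}\beta_{\st{C}_{k}^{3}}\right|\right\} \leq\gamma\itr{i}$. The outer skeleton matches the paper's proof --- optimality of the projection against the feasible point $\overline{\vcg{\uptheta}}$, linearization of $\nabla f$ between $\overline{\vcg{\uptheta}}$ and $\vcg{\uptheta}\itr{i}$, and domination of $\norm{\nabla_{\st{J}}f(\overline{\vcg{\uptheta}})}$ by $\norm{\nabla_{\overline{\st{I}}}f(\overline{\vcg{\uptheta}})}$ --- but the technical core is genuinely different. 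The paper never forms a restricted matrix: after expanding the projection inequality it keeps everything as inner products against $\vcg{\Delta}\itr{i+1}$, invokes the mean-value theorem at a single intermediate point, and extracts the contraction factor from Lemma~\ref{lem:BasicIneq}, whose $\vc{u}\pm\vc{v}$ polarization argument is precisely what lets it handle the two differently supported vectors $\vcg{\Delta}\itr{i+1}$ and $\vcg{\Delta}\itr{i}$ without any spectral machinery; $\norm{\vcg{\Delta}\itr{i+1}}$ is cancelled only at the very end. You instead apply Cauchy--Schwarz immediately to reduce to $\norm{\res{(\vcg{\upchi}\itr{i}-\overline{\vcg{\uptheta}})}_{\st{J}}}$, use the integral form of the gradient expansion to assemble the averaged Hessian block $\mx{H}$ on the $\st{I}\times\st{I}$ coordinates, and bound $\norm{\mx{I}-\eta\itr{i}\mx{H}}_{\mathrm{op}}\leq\gamma\itr{i}$ spectrally --- the standard IHT-style route, which bypasses Lemma~\ref{lem:BasicIneq} entirely. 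Both arguments yield exactly the same constants; yours makes the origin of $\gamma\itr{i}$ as a restricted spectral radius more transparent and avoids the (vector-valued) mean-value step, while the paper's inner-product lemma avoids constructing restricted matrices and is packaged for reuse. Your extra care in justifying the maximality of $\overline{\st{I}}$ via Lemma~\ref{lem:BoundProj} is a point the paper asserts only implicitly ("by the definition of $\overline{\st{I}}$"), and it is worth noting, as you do, that the radius constraint in the projection does not change the selected support.
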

Theorem \ref{thm:IterInvar} can be used to localize an ``attractor set'' of the iterates $\vcg{\uptheta}\itr{i}$ with respect to the desired reference point $\overline{\vcg{\uptheta}}$. In particular, if $2\gamma\itr{i}<1$ the bound \eqref{eq:MainTheorem} guarantees an approximate contraction which can be used recursively as in Corollary \ref{cor:FixedStep} below. Ideally, the iterates eventually fall within a relatively small ball centered at the desired $\overline{\vcg{\uptheta}}$. We refer to  the radius of this ball as the approximation error. Theorem \ref{thm:IterInvar} helps to bound the approximation error in terms of $\nabla f\left(\overline{\vcg{\uptheta}}\right)$. For instance, if at a model-sparse minimizer obtained by \eqref{eq:ModelConsOpt} the gradient of the objective (restricted to indices $\overline{\st{I}}$ ) has a small $\ell_{2}$-norm then the iterates can provide accurate estimates of the minimizer. In particular, if the restricted gradient vanishes at a model-sparse minimizer, the approximation error with respect to that point would be zero, i.e., the iterates converge provided that $2\gamma\itr{i}<1$ is guaranteed. This scenario can occur when the objective has multiple stationary points only one of which is within the model, a typical case in high-dimensional estimation problems. For example, classical guarantees for noiseless sparse linear regression provide exact recovery if the signal is exactly sparse, but only an approximation if the signal is approximately sparse. While \eqref{eq:MainTheorem} still holds if  the gradient is not restricted to $\overline{\st{I}}$, the obtained approximation error might not be sufficiently tight. For example, applying \eqref{eq:MainTheorem}  in statistical estimation problems with $\nabla f(\vcg{\uptheta})$  replacing $\nabla_{\overline{\st{I}}}f\left(\overline{\vcg{\uptheta}}\right)$ would yield loose estimation error bounds that grow with the ambient dimension rather than the sparsity of the target parameter.
\begin{rem}
One should choose the step size to achieve a contraction factor $2\gamma\itr{i}$
that is as small as possible. Straightforward algebra shows that the
constant step-size $\eta\itr{i}=\eta^{\star}$ is optimal, but this
choice may not be practical as the constants $\alpha_{\st{C}_{k}^{3}}$
and $\beta_{\st{C}_{k}^{3}}$ might not be known. Instead, we can
always choose the step-size such that $1/\alpha_{\st{C}_{k}^{3}}\leq\eta\itr{i}\leq1/\beta_{\st{C}_{k}^{3}}$
provided that the cost function obeys the SMRH condition. It suffices
to set $\eta\itr{i}=1/\left\langle \vcg{\Delta},\nabla^{2}f\left(\vcg{\uptheta}\right)\vcg{\Delta}\right\rangle $
for some $\vcg{\Delta}$,$\vcg{\uptheta}\in\st{H}$ that obeys $\supp\left(\vcg{\Delta}\right)\cup\supp\left(\vcg{\uptheta}\right)\in\st{M}\left(\st{C}_{k}^{3}\right)$.
For this choice of $\eta\itr{i}$, we have $\gamma\itr{i}\leq\mu_{\st{C}_{k}^{3}}-1$.\end{rem}
\begin{cor}
\label{cor:FixedStep}A fixed step-size $\eta>0$ corresponds to a
fixed contraction coefficient $\gamma=\frac{\eta}{\eta^{\star}}\frac{\mu_{\st{C}_{k}^{3}}-1}{\mu_{\st{C}_{k}^{3}}+1}+\left|\frac{\eta}{\eta^{\star}}-1\right|$.
In this case, assuming that $2\gamma\neq1$, the $i$-th iterate of
Algorithm \ref{alg:GDMS} satisfies 
\begin{alignat}{1}
\norm{\vcg{\uptheta}\itr{i}-\overline{\vcg{\uptheta}}} & \leq\left(2\gamma\right)^{i}\norm{\overline{\vcg{\uptheta}}}+2\eta\frac{1-\left(2\gamma\right)^{i}}{1-2\gamma}\norm{\nabla_{\overline{\st{I}}}f\left(\overline{\vcg{\uptheta}}\right)}.\label{eq:FixedStep}
\end{alignat}
 In particular, 
\begin{enumerate}
\item [(i)]if $\mu_{\st{C}_{k}^{3}}<3$ and $\eta=\eta^{\star}=2/\left(\alpha_{\st{C}_{k}^{3}}+\beta_{\st{C}_{k}^{3}}\right)$,
or
\item [(ii)]if $\mu_{\st{C}_{k}^{3}}<\frac{3}{2}$ and $\eta\in\left[1/\alpha_{\st{C}_{k}^{3}},1/\beta_{\st{C}_{k}^{3}}\right]$, 
\end{enumerate}
the distance of the iterates and $\overline{\vcg{\uptheta}}$ shrinks
up to an approximation error bounded above by $\frac{2\eta}{1-2\gamma}\norm{\nabla_{\overline{\st{I}}}f\left(\overline{\vcg{\uptheta}}\right)}$
with contraction factor $2\gamma<1$.\end{cor}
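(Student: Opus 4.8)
The plan is to derive (\ref{eq:FixedStep}) by unrolling the one-step recursion (\ref{eq:MainTheorem}) of Theorem \ref{thm:IterInvar}. With a constant step-size $\eta\itr{i}=\eta$ the contraction coefficient $\gamma\itr{i}$ equals the fixed constant $\gamma$, so (\ref{eq:MainTheorem}) becomes $\norm{\vcg{\uptheta}\itr{i+1}-\overline{\vcg{\uptheta}}}\leq 2\gamma\norm{\vcg{\uptheta}\itr{i}-\overline{\vcg{\uptheta}}}+2\eta\norm{\nabla_{\overline{\st{I}}}f\left(\overline{\vcg{\uptheta}}\right)}$ for every $i\geq 0$. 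I would then prove (\ref{eq:FixedStep}) by induction on $i$. The base case $i=0$ is immediate: Alg.\ \ref{alg:GDMS} initializes $\vcg{\uptheta}\itr{0}=\vc{0}$, so the left-hand side is $\norm{\overline{\vcg{\uptheta}}}$, and the right-hand side is $\norm{\overline{\vcg{\uptheta}}}$ too since the geometric factor $\tfrac{1-(2\gamma)^{0}}{1-2\gamma}$ vanishes. For the inductive step, substitute the bound assumed for $\norm{\vcg{\uptheta}\itr{i}-\overline{\vcg{\uptheta}}}$ into the recursion and collect $2\gamma\cdot\tfrac{1-(2\gamma)^{i}}{1-2\gamma}+1=\tfrac{1-(2\gamma)^{i+1}}{1-2\gamma}$; this algebraic identity is exactly where the hypothesis $2\gamma\neq 1$ is needed (it keeps the closed form of the partial geometric sum well defined).

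To obtain items (i) and (ii) it then suffices to verify that $2\gamma<1$ under the stated hypotheses and read off the limit of (\ref{eq:FixedStep}). In case (i), $\eta=\eta^{\star}$ forces $\eta/\eta^{\star}=1$, so the absolute-value term in the definition of $\gamma$ drops out and $\gamma=\tfrac{\mu_{\st{C}_{k}^{3}}-1}{\mu_{\st{C}_{k}^{3}}+1}$; here $2\gamma<1$ is equivalent to $2\mu_{\st{C}_{k}^{3}}-2<\mu_{\st{C}_{k}^{3}}+1$, i.e.\ $\mu_{\st{C}_{k}^{3}}<3$. In case (ii), set $t=\eta/\eta^{\star}=\tfrac{1}{2}\eta\left(\alpha_{\st{C}_{k}^{3}}+\beta_{\st{C}_{k}^{3}}\right)$; the constraint $1/\alpha_{\st{C}_{k}^{3}}\leq\eta\leq 1/\beta_{\st{C}_{k}^{3}}$ places $t$ in $\left[\tfrac{\mu_{\st{C}_{k}^{3}}+1}{2\mu_{\st{C}_{k}^{3}}},\tfrac{\mu_{\st{C}_{k}^{3}}+1}{2}\right]$, an interval that straddles $1$. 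On $\{t\leq 1\}$ we have $\gamma=1-\tfrac{2t}{\mu_{\st{C}_{k}^{3}}+1}$ and on $\{t\geq 1\}$ we have $\gamma=\tfrac{2\mu_{\st{C}_{k}^{3}}}{\mu_{\st{C}_{k}^{3}}+1}\,t-1$; since $\gamma$ is affine in $t$ on each piece, maximizing over the endpoints gives $\gamma\leq\mu_{\st{C}_{k}^{3}}-1$ (this is the bound already recorded in the remark preceding the corollary), so $2\gamma<1$ holds whenever $\mu_{\st{C}_{k}^{3}}<3/2$. Finally, once $2\gamma<1$ the term $(2\gamma)^{i}\norm{\overline{\vcg{\uptheta}}}$ decays geometrically and $\tfrac{1-(2\gamma)^{i}}{1-2\gamma}\leq\tfrac{1}{1-2\gamma}$, so the residual is bounded by $\tfrac{2\eta}{1-2\gamma}\norm{\nabla_{\overline{\st{I}}}f\left(\overline{\vcg{\uptheta}}\right)}$, as claimed.

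I do not expect a real obstacle: the argument is an elementary induction followed by a case analysis of the piecewise-affine map $t\mapsto\gamma$. The only place that needs a little care is the sign bookkeeping in $\left|\eta/\eta^{\star}-1\right|$ and the correct evaluation of the endpoints of the $t$-interval in case (ii); both ultimately reduce to the trivial inequality $\left(\mu_{\st{C}_{k}^{3}}-1\right)^{2}\geq 0$.
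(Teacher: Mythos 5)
Your proposal is correct and follows essentially the same route as the paper: unroll the recursion of Theorem \ref{thm:IterInvar} with the geometric-sum identity (the paper states this directly rather than via explicit induction, using $\vcg{\uptheta}^{(0)}=\vc{0}$), then verify $2\gamma<1$ in each case. Your endpoint analysis of the piecewise-affine map $t\mapsto\gamma$ in case (ii) is just a slightly more explicit presentation of the paper's bounds $\left|\eta/\eta^{\star}-1\right|\leq\frac{\mu_{\st{C}_{k}^{3}}-1}{2}$ and $\eta/\eta^{\star}\leq\frac{\mu_{\st{C}_{k}^{3}}+1}{2}$, both yielding $\gamma\leq\mu_{\st{C}_{k}^{3}}-1<\frac{1}{2}$.
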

\begin{proof}
Applying (\ref{eq:MainTheorem}) recursively under the assumptions
of the corollary and using the identity $\sum_{j=0}^{i-1}\left(2\gamma\right)^{j}=\frac{1-\left(2\gamma\right)^{i}}{1-2\gamma}$
proves (\ref{eq:FixedStep}). In the first case, if $\mu_{\st{C}_{k}^{3}}<3$
and $\eta=\eta^{\star}=2/\left(\alpha_{\st{C}_{k}^{3}}+\beta_{\st{C}_{k}^{3}}\right)$
we have $2\gamma<1$ by definition of $\gamma$. In the second case,
one can deduce from $\eta\in\left[1/\alpha_{\st{C}_{k}^{3}},1/\beta_{\st{C}_{k}^{3}}\right]$
that $\left|\eta/\eta^{\star}-1\right|\leq\frac{\mu_{\st{C}_{k}^{3}}-1}{2}$
and $\eta/\eta^{\star}\leq\frac{\mu_{\st{C}_{k}^{3}}+1}{2}$ where
equalities are attained simultaneously at $\eta=1/\beta_{\st{C}_{k}^{3}}$.
Therefore, $\gamma\leq\mu_{\st{C}_{k}^{3}}-1<1/2$ and thus $2\gamma<1$.
Finally, in both cases it immediately follows from (\ref{eq:FixedStep})
that the approximation error converges to $\frac{2\eta}{1-2\gamma}\norm{\nabla_{\overline{\st{I}}}f\left(\overline{\vcg{\uptheta}}\right)}$
from below as $i\to+\infty$.
\end{proof}

\section{\label{sec:Example}Application in Generalized Linear Models}

Generalized Linear Models (GLMs) are among the most commonly used
models for parametric estimation in variety of applications \cite{dobson_introduction_2008}.
Linear, logistic, Poisson, and gamma models used in corresponding
regression problems all belong to the family of GLMs. Given a covariate
vector $\vc{x}\in\st{X}\subseteq\mathbb{R}^{p}$ and a true parameter
$\vcg{\uptheta}^{\star}\in\mathbb{R}^{p}$, the response variable
$y\in\st{Y}\subseteq\mathbb{R}$ in canonical GLMs is assumed to follow
an exponential family conditional distribution: $y\mid\vc{x};\vcg{\uptheta}^{\star}\sim Z\left(y\right)\exp\left(y\left\langle \vc{x},\vcg{\uptheta}^{\star}\right\rangle -\psi\left(\left\langle \vc{x},\vcg{\uptheta}^{\star}\right\rangle \right)\right),$
where $Z\left(y\right)$ is a positive function, and $\psi:\mathbb{R}\mapsto\mathbb{R}$
is the \emph{log-partition function} that satisfies $\psi\left(t\right)=\log\int_{\st{Y}}Z\left(y\right)\exp\left(ty\right)\dx y$
for all $t\in\mathbb{R}$. Examples of the log-partition function
include but are not limited to $\psi_{\text{lin}}\left(t\right)=t^{2}/2\sigma^{2}$,
$\psi_{\log}\left(t\right)=\log\left(1+\exp\left(t\right)\right)$,
and $\psi_{\text{Pois}}\left(t\right)=\exp\left(t\right)$ corresponding
to linear, logistic, and Poisson models, respectively.%

Suppose that $n$ iid covariate-response pairs $\left\{ \left(\vc{x}_{i},y_{i}\right)\right\} _{i=1}^{n}$
are observed. In the Maximum Likelihood Estimation (MLE) framework
the negative log likelihood is used as a measure of the discrepancy
between the true parameter $\vcg{\uptheta}^{\star}$ and an estimate
$\vcg{\uptheta}$ based on the observations. Formally, the average
of negative log likelihoods is considered as the empirical loss 
\begin{alignat*}{1}
f\left(\vcg{\uptheta}\right) & =\frac{1}{n}\sum_{i=1}^{n}\psi\left(\left\langle \vc{x}_{i},\vcg{\uptheta}\right\rangle \right)-y_{i}\left\langle \vc{x}_{i},\vcg{\uptheta}\right\rangle ,
\end{alignat*}
 and the MLE is performed by minimizing $f\left(\vcg{\uptheta}\right)$
over the set of feasible $\vcg{\uptheta}$. The constants $c$ and
$Z$ that appear in the distribution are disregarded as they have
no effect in the outcome.

\subsection{Verifying SMRH for GLMs}

Assuming that $\psi\left(\cdot\right)$ is twice continuously differentiable,
the Hessian of $f\left(\cdot\right)$ is equal to 
\begin{alignat*}{1}
\nabla^{2}f\left(\vcg{\uptheta}\right) & =\frac{1}{n}\sum_{i=1}^{n}\psi''\left(\left\langle \vc{x}_{i},\vcg{\uptheta}\right\rangle \right)\vc{x}_{i}\vc{x}_{i}\tran.
\end{alignat*}
 Under the assumptions for GLMs, it can be shown that $\psi''\left(\cdot\right)$
is non-negative (i.e., $\psi\left(\cdot\right)$ is convex). For a
given sparsity model generated by $\st{C}_{k}$ let $\st{S}$ be an
arbitrary support set in $\st{C}_{k}$ and suppose that $\supp\left(\vcg{\uptheta}\right)\subseteq\st{S}$
and $\norm{\theta}\leq r$ . Furthermore, define 
\begin{alignat*}{3}
D_{\psi,r}\left(u\right) & :=\max_{t\in\left[-r,r\right]}\psi''\left(tu\right) & \quad\text{and}\quad & d_{\psi,r}\left(u\right) & :=\min_{t\in\left[-r,r\right]}\psi''\left(tu\right).
\end{alignat*}
Using the Cauchy-Schwarz inequality we have $\left|\left\langle \vc{x}_{i},\vcg{\uptheta}\right\rangle \right|\leq r\norm{\res{\vc{x}_{i}}_{\st{S}}}$
which implies 
\begin{alignat*}{2}
\frac{1}{n}\sum_{i=1}^{n}d_{\psi,r}\left(\norm{\res{\vc{x}_{i}}_{\st{S}}}\right)\res{\vc{x}_{i}}_{\st{S}}\res{\vc{x}_{i}}_{\st{S}}\tran & \preccurlyeq\nabla_{\st{S}}^{2}f\left(\vcg{\uptheta}\right) & \preccurlyeq\frac{1}{n}\sum_{i=1}^{n}D_{\psi,r}\left(\norm{\res{\vc{x}_{i}}_{\st{S}}}\right)\res{\vc{x}_{i}}_{\st{S}}\res{\vc{x}_{i}}_{\st{S}}\tran.
\end{alignat*}
These matrix inequalities are precursors of (\ref{eq:SRH}). Imposing
further restriction on the distribution of the covariate vectors $\left\{ \vc{x}_{i}\right\} _{i=1}^{n}$
allows application of the results from random matrix theory regarding
the extreme eigenvalues of random matrices (see e.g., \cite{tropp_user-friendly_2011}
and \cite{hsu_tail_2011}). 

For example, in the logistic model where $\psi\equiv\psi_{\log}$
we can show that $D_{\psi,r}\left(u\right)=\frac{1}{4}$ and $d_{\psi,r}\left(u\right)=\frac{1}{4}\mathrm{sech}^{2}\left(\frac{ru}{2}\right)$.
Assuming that the covariate vectors are iid instances of a random
vectors whose length almost surely bounded by one, we obtain $d_{\psi,r}\left(u\right)\geq\frac{1}{4}\mathrm{sech}^{2}\left(\frac{r}{2}\right)$.
Using the matrix Chernoff inequality \cite{tropp_user-friendly_2011}
the extreme eigenvalues of $\frac{1}{n}\mx{X}_{\st{S}}\mx{X}_{\st{S}}\tran$
can be bounded with probability $1-\exp\left(\log k-Cn\right)$ for
some constant $C>0$ (see \cite{bahmani_GraSP_2012} for detailed
derivations). Using these results and taking the union bound over
all $\st{S}\in\st{C}_{k}$ we obtain bounds for the extreme eigenvalues
of $\nabla_{\st{S}}^{2}f\left(\vcg{\uptheta}\right)$ that hold uniformly
for all sets $\st{S}\in\st{C}_{k}$ with probability $1-\exp\left(\log\left(k\left|\st{C}_{k}\right|\right)-Cn\right)$.
Thus (\ref{eq:SRH}) may hold if $n=O\left(\log\left(k\left|\st{C}_{k}\right|\right)\right)$.

\subsection{Approximation Error for GLMs}

Suppose that the approximation error is measured with respect to $\vcg{\uptheta}^{\perp}=\mathrm{P}_{\st{C}_{k},r}\left[\vcg{\uptheta}^{\star}\right]$
where $\vcg{\uptheta}^{\star}$ is the statistical truth in the considered
GLM. It is desirable to further simplify the approximation error bound
provided in Corollary \ref{cor:FixedStep} which is related to the
statistical precision of the estimation problem. The corollary provides
an approximation error that is proportional to $\norm{\nabla_{\st{T}}f\left(\vcg{\uptheta}^{\perp}\right)}$
where $\mbox{\ensuremath{\st{T}=\supp\left(\mathrm{P}_{\st{C}_{k}^{2},r}\left[\nabla f\left(\vcg{\uptheta}^{\perp}\right)\right]\right)}}$
. We can write 
\begin{alignat*}{1}
\nabla_{\st{T}}f\left(\vcg{\uptheta}^{\perp}\right) & =\frac{1}{n}\sum_{i=1}^{n}\left(\psi'\left(\left\langle \vc{x}_{i},\vcg{\uptheta}^{\perp}\right\rangle \right)-y_{i}\right)\res{\vc{x}_{i}}_{\st{T}},
\end{alignat*}
 which yields $\norm{\nabla_{\st{T}}f\left(\vcg{\uptheta}^{\perp}\right)}=\norm{\mx{X}_{\st{T}}\vc{z}}$
where $\mx{X}=\frac{1}{\sqrt{n}}\left[\begin{array}{cccc}
\vc{x}_{1} & \vc{x}_{2} & \cdots & \vc{x}_{n}\end{array}\right]$ and $\res{\vc{z}}_{\left\{ i\right\} }=z_{i}=\frac{\psi'\left(\left\langle \vc{x}_{i},\vcg{\uptheta}^{\perp}\right\rangle \right)-y_{i}}{\sqrt{n}}$.
Therefore, 
\begin{alignat*}{1}
\norm{\nabla_{\st{T}}f\left(\vcg{\uptheta}^{\perp}\right)}^{2} & \leq\norm{\mx{X}_{\st{T}}}_{\mathrm{op}}^{2}\norm{\vc{z}}^{2},
\end{alignat*}
 where $\norm{\cdot}_{\mathrm{op}}$ denotes the operator norm. Again
using random matrix theory one can find an upper bound for $\norm{\mx{X}_{\st{I}}}_{\mathrm{op}}$
that holds uniformly for any $\st{I}\in\st{C}_{k}^{2}$ and in particular
for $\st{I}=\st{T}$. Henceforth, $W>0$ is used to denote this upper
bound.

The second term in the bound can be written as 
\begin{alignat*}{1}
\norm{\vc{z}}^{2} & =\frac{1}{n}\sum_{i=1}^{n}\left(\psi'\left(\left\langle \vc{x}_{i},\vcg{\uptheta}^{\perp}\right\rangle \right)-y_{i}\right)^{2}.
\end{alignat*}
 To further simplify this term we need to make assumptions about the
log-partition function $\psi\left(\cdot\right)$ and/or the distribution
of the covariate-response pair $\left(\vc{x},y\right)$. For instance,
if $\psi'\left(\cdot\right)$ and the response variable $y$ are bounded,
as in the logistic model, then Hoeffding's inequality implies that
for some small $\epsilon>0$ we have $\norm{\vc{z}}^{2}\leq\mathbb{E}\left[\left(\psi'\left(\left\langle \vc{x},\vcg{\uptheta}^{\perp}\right\rangle \right)-y\right)^{2}\right]+\epsilon$
with probability at least $1-\exp\left(-O\left(\epsilon^{2}n\right)\right)$.
Since in GLMs the true parameter $\vcg{\uptheta}^{\star}$ is the
minimizer of the expected loss $\mathbb{E}\left[\psi\left(\left\langle \vc{x},\vcg{\uptheta}\right\rangle \right)-y\left\langle \vc{x},\vcg{\uptheta}\right\rangle \mid\vc{x}\right]$
we deduce that $\mathbb{E}\left[\psi'\left(\left\langle \vc{x},\vcg{\uptheta}^{\star}\right\rangle \right)-y\mid\vc{x}\right]=0$
and hence $\mathbb{E}\left[\psi'\left(\left\langle \vc{x},\vcg{\uptheta}^{\star}\right\rangle \right)-y\right]=0$.
Therefore, 
\begin{alignat*}{1}
\norm{\vc{z}}^{2} & \leq\mathbb{E}\left[\mathbb{E}\left[\left(\psi'\left(\left\langle \vc{x},\vcg{\uptheta}^{\perp}\right\rangle \right)-\psi'\left(\left\langle \vc{x},\vcg{\uptheta}^{\star}\right\rangle \right)\right.\right.\right.\left.\left.+\psi'\left(\left\langle \vc{x},\vcg{\uptheta}^{\star}\right\rangle \right)-y\Bigr)^{2}\mid\vc{x}\right]\right]+\epsilon\\
 & \leq\mathbb{E}\left[\left(\psi'\left(\left\langle \vc{x},\vcg{\uptheta}^{\perp}\right\rangle \right)-\psi'\left(\left\langle \vc{x},\vcg{\uptheta}^{\star}\right\rangle \right)\right)^{2}\right]+\mathbb{E}\left[\left(\psi'\left(\left\langle \vc{x},\vcg{\uptheta}^{\star}\right\rangle \right)-y\right)^{2}\right]+\epsilon.\\
 & =\underbrace{\mathbb{E}\left[\left(\psi'\left(\left\langle \vc{x},\vcg{\uptheta}^{\perp}\right\rangle \right)-\psi'\left(\left\langle \vc{x},\vcg{\uptheta}^{\star}\right\rangle \right)\right)^{2}\right]}_{\delta_{1}}+\underbrace{\mathrm{var}\left(\psi'\left(\left\langle \vc{x},\vcg{\uptheta}^{\star}\right\rangle \right)-y\right)+\epsilon}_{\sigma_{\text{{stat}}}^{2}}.
\end{alignat*}
 Then it follows from Corollary \ref{cor:FixedStep} and the fact
that $\norm{\res{\mx{X}}_{\st{I}}}_{\mathrm{op}}\leq W$ that 
\begin{alignat*}{1}
\norm{\vcg{\uptheta}\itr{i}-\vcg{\uptheta}^{\star}} & \leq\norm{\vcg{\uptheta}\itr{i}-\vcg{\uptheta}^{\perp}}+\underbrace{\norm{\vcg{\uptheta}^{\perp}-\vcg{\uptheta}^{\star}}}_{\delta_{2}}\\
 & \leq\left(2\gamma\right)^{i}\norm{\vcg{\uptheta}^{\perp}}+\frac{2\eta W}{1-2\gamma}\sigma_{\text{stat}}^{2}+\frac{2\eta W}{1-2\gamma}\delta_{1}+\delta_{2}.
\end{alignat*}
We see the total approximation error is comprised of two parts. The
first part is due to statistical error that is given by $\frac{2\eta W}{1-2\gamma}\sigma_{\text{stat}}^{2}$,
and $\frac{2\eta W}{1-2\gamma}\delta_{1}+\delta_{2}$ is the second
part of the error due to the bias that occurs because of an infeasible
true parameter. The bias vanishes if the true parameter lies in the
considered bounded sparsity model (i.e., $\vcg{\uptheta}^{\star}=\mathrm{P}_{\st{C}_{k},r}\left[\vcg{\uptheta}^{\star}\right]$).

\section{\label{sec:Discussion}Conclusion}

We studied the projected gradient descent method for minimization
of a real valued cost function defined over a finite-dimensional Hilbert
space, under structured sparsity constraints. Using previously known
combinatorial sparsity models, we define a sufficient condition for
accuracy of the algorithm, the SMRH. Under this condition the algorithm
produces an approximation for the desired optimum at a linear rate.
Unlike the previous results on greedy-type methods that merely have
focused on linear statistical models, our algorithm applies to a broader
family of estimation problems. To provide an example, we examined
application of the algorithm in estimation with GLMs and showed how
the SMRH can be verified for these models. The approximation error
can also be bounded by statistical precision and the potential bias.
An interesting follow-up problem is to find whether the approximation
error can be improved and the derived error is merely a by-product
of requiring some form of restricted strong convexity through SMRH.

\appendix
[Proofs]
\begin{lem}
\label{lem:BasicIneq}Suppose that $f$ is a twice differentiable
function that satisfies (\ref{eq:SRH}) for a given $\vcg{\uptheta}$
and all $\vcg{\Delta}$ such that $\supp\left(\vcg{\Delta}\right)\cup\supp\left(\vcg{\uptheta}\right)\in\st{M}\left(\st{C}_{k}\right)$.
Then we have 
\begin{alignat*}{1}
\left|\left\langle \vc{u},\vc{v}\right\rangle -\eta\left\langle \vc{u},\nabla^{2}f\left(\vcg{\uptheta}\right)\vc{v}\right\rangle \right| & \leq\left(\eta\frac{\alpha_{\st{C}_{k}}-\beta_{\st{C}_{k}}}{2}+\left|\eta\frac{\alpha_{\st{C}_{k}}+\beta_{\st{C}_{k}}}{2}-1\right|\right)\norm{\vc{u}}\norm{\vc{v}},
\end{alignat*}
 for all $\eta>0$ and $\vc{u},\vc{v}\in\st{H}$ such that $\supp\left(\vc{u}\pm\vc{v}\right)\cup\supp\left(\vcg{\uptheta}\right)\in\st{M}\left(\st{C}_{k}\right)$. \end{lem}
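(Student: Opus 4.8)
The plan is to derive the bilinear estimate from the quadratic bound (\ref{eq:SRH}) by polarization, working with the symmetric operator $\mx{A}:=\mathrm{Id}-\eta\,\nabla^{2}f\left(\vcg{\uptheta}\right)$. First I would restate (\ref{eq:SRH}) in terms of $\mx{A}$: for every $\vcg{\Delta}$ with $\supp\left(\vcg{\Delta}\right)\cup\supp\left(\vcg{\uptheta}\right)\in\st{M}\left(\st{C}_{k}\right)$, subtracting $\eta$ times (\ref{eq:SRH}) from $\norm{\vcg{\Delta}}^{2}$ gives $\left(1-\eta\alpha_{\st{C}_{k}}\right)\norm{\vcg{\Delta}}^{2}\leq\left\langle \vcg{\Delta},\mx{A}\vcg{\Delta}\right\rangle \leq\left(1-\eta\beta_{\st{C}_{k}}\right)\norm{\vcg{\Delta}}^{2}$. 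Rewriting the two endpoints as $\bigl(1-\eta\tfrac{\alpha_{\st{C}_{k}}+\beta_{\st{C}_{k}}}{2}\bigr)\mp\eta\tfrac{\alpha_{\st{C}_{k}}-\beta_{\st{C}_{k}}}{2}$ and applying the triangle inequality yields the symmetrized bound $\left|\left\langle \vcg{\Delta},\mx{A}\vcg{\Delta}\right\rangle \right|\leq C\norm{\vcg{\Delta}}^{2}$, where $C:=\eta\tfrac{\alpha_{\st{C}_{k}}-\beta_{\st{C}_{k}}}{2}+\bigl|\eta\tfrac{\alpha_{\st{C}_{k}}+\beta_{\st{C}_{k}}}{2}-1\bigr|$ is exactly the constant in the statement.

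Next I would polarize. The case $\vc{u}=\vc{0}$ or $\vc{v}=\vc{0}$ is trivial, so assume both are nonzero and set $\vc{w}_{\pm}:=\norm{\vc{v}}\,\vc{u}\pm\norm{\vc{u}}\,\vc{v}$. A direct expansion gives $\norm{\vc{w}_{+}}^{2}+\norm{\vc{w}_{-}}^{2}=4\norm{\vc{u}}^{2}\norm{\vc{v}}^{2}$ and, using the symmetry of $\mx{A}$, $\left\langle \vc{w}_{+},\mx{A}\vc{w}_{+}\right\rangle -\left\langle \vc{w}_{-},\mx{A}\vc{w}_{-}\right\rangle =4\norm{\vc{u}}\norm{\vc{v}}\left\langle \vc{u},\mx{A}\vc{v}\right\rangle $. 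Granting that $\vc{w}_{+}$ and $\vc{w}_{-}$ are admissible test vectors for the symmetrized bound (see the last paragraph), applying that bound to each of them and combining with the two identities above gives $4\norm{\vc{u}}\norm{\vc{v}}\left|\left\langle \vc{u},\mx{A}\vc{v}\right\rangle \right|\leq C\bigl(\norm{\vc{w}_{+}}^{2}+\norm{\vc{w}_{-}}^{2}\bigr)=4C\norm{\vc{u}}^{2}\norm{\vc{v}}^{2}$. Dividing by $4\norm{\vc{u}}\norm{\vc{v}}$ and using $\left\langle \vc{u},\mx{A}\vc{v}\right\rangle =\left\langle \vc{u},\vc{v}\right\rangle -\eta\left\langle \vc{u},\nabla^{2}f\left(\vcg{\uptheta}\right)\vc{v}\right\rangle $ is the claimed inequality.

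The only delicate point — and the main obstacle — is confirming that $\vc{w}_{\pm}$ satisfy the support condition needed to invoke the symmetrized bound, namely $\supp\left(\vc{w}_{\pm}\right)\cup\supp\left(\vcg{\uptheta}\right)\in\st{M}\left(\st{C}_{k}\right)$. For this I would use the inclusion $\supp\left(\vc{w}_{\pm}\right)\subseteq\supp\left(\vc{u}\right)\cup\supp\left(\vc{v}\right)$ together with the elementary identity $\supp\left(\vc{u}\right)\cup\supp\left(\vc{v}\right)=\supp\left(\vc{u}+\vc{v}\right)\cup\supp\left(\vc{u}-\vc{v}\right)$ (a coordinate vanishes in both $\vc{u}$ and $\vc{v}$ exactly when it vanishes in both $\vc{u}+\vc{v}$ and $\vc{u}-\vc{v}$). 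Hence $\supp\left(\vc{w}_{\pm}\right)\cup\supp\left(\vcg{\uptheta}\right)$ is contained in $\supp\left(\vc{u}\pm\vc{v}\right)\cup\supp\left(\vcg{\uptheta}\right)$, which lies in $\st{M}\left(\st{C}_{k}\right)$ by hypothesis, and membership passes to subsets because $\st{M}\left(\st{C}_{k}\right)=\bigcup_{\st{S}\in\st{C}_{k}}2^{\st{S}}$ is closed under taking subsets. With this bookkeeping in place, Lemma \ref{lem:BasicIneq} follows.
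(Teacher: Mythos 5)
Your polarization machinery is fine, and in substance it is the paper's own argument in different packaging: your $\vc{w}_{\pm}=\norm{\vc{v}}\vc{u}\pm\norm{\vc{u}}\vc{v}$ equals $\norm{\vc{u}}\norm{\vc{v}}\left(\vc{u}'\pm\vc{v}'\right)$ for the normalized vectors $\vc{u}'=\vc{u}/\norm{\vc{u}}$, $\vc{v}'=\vc{v}/\norm{\vc{v}}$ that the paper reduces to, and your bound $\left|\left\langle \vcg{\Delta},\mx{A}\vcg{\Delta}\right\rangle \right|\leq C\norm{\vcg{\Delta}}^{2}$ for $\mx{A}=\mathrm{Id}-\eta\nabla^{2}f\left(\vcg{\uptheta}\right)$ is the same rearrangement of (\ref{eq:SRH}) that the paper performs after multiplying by $\eta$. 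The gap is exactly at the point you flag as delicate, and your resolution of it does not work. From $\supp\left(\vc{w}_{\pm}\right)\subseteq\supp\left(\vc{u}\right)\cup\supp\left(\vc{v}\right)=\supp\left(\vc{u}+\vc{v}\right)\cup\supp\left(\vc{u}-\vc{v}\right)$ you conclude that $\supp\left(\vc{w}_{\pm}\right)\cup\supp\left(\vcg{\uptheta}\right)$ ``lies in $\st{M}\left(\st{C}_{k}\right)$ by hypothesis.'' The hypothesis only asserts that each of the two sets $\supp\left(\vc{u}+\vc{v}\right)\cup\supp\left(\vcg{\uptheta}\right)$ and $\supp\left(\vc{u}-\vc{v}\right)\cup\supp\left(\vcg{\uptheta}\right)$ belongs to the model separately; they may sit inside different generators of $\st{C}_{k}$, and $\st{M}\left(\st{C}_{k}\right)$ is closed under subsets but not under unions, so their union need not be in the model. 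Nor is $\supp\left(\vc{w}_{+}\right)$ contained in $\supp\left(\vc{u}+\vc{v}\right)$ in general, because the reweighting by $\norm{\vc{u}}$ and $\norm{\vc{v}}$ destroys cancellations. A concrete failure: $p=3$, $\st{C}_{k}=\left\{ \left\{ 1,2\right\} ,\left\{ 2,3\right\} \right\} $, $\supp\left(\vcg{\uptheta}\right)=\left\{ 2\right\} $, $\vc{u}=\left(0.5,1,0.5\right)$, $\vc{v}=\left(0.5,0,-0.5\right)$. Then $\supp\left(\vc{u}+\vc{v}\right)\cup\supp\left(\vcg{\uptheta}\right)=\left\{ 1,2\right\} $ and $\supp\left(\vc{u}-\vc{v}\right)\cup\supp\left(\vcg{\uptheta}\right)=\left\{ 2,3\right\} $ are both admissible, but $\norm{\vc{u}}\neq\norm{\vc{v}}$ and $\vc{w}_{+}$ has support $\left\{ 1,2,3\right\} $, contained in no generator, so (\ref{eq:SRH}) cannot be invoked for $\vcg{\Delta}=\vc{w}_{+}$ on the strength of the stated hypothesis.

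For comparison, the paper first proves the inequality for unit-norm $\vc{u},\vc{v}$, where it tests (\ref{eq:SRH}) with $\vcg{\Delta}=\vc{u}\pm\vc{v}$ themselves, so at that stage the support condition used is literally the one assumed; its extension to general norms by rescaling implicitly needs the very admissibility of $\vc{u}'\pm\vc{v}'$ that blocks you, so the difficulty is intrinsic to the lemma's literal hypothesis rather than to your route. The clean repair is to argue under the condition $\supp\left(\vc{u}\right)\cup\supp\left(\vc{v}\right)\cup\supp\left(\vcg{\uptheta}\right)\in\st{M}\left(\st{C}_{k}\right)$, which makes $\vc{w}_{\pm}$ (indeed any linear combination of $\vc{u}$ and $\vc{v}$) admissible and lets your argument go through verbatim; this stronger condition is exactly what is available where the lemma is applied in the proof of Theorem \ref{thm:IterInvar}, since there $\vc{u}$, $\vc{v}$ and $\vcg{\uptheta}$ are built from three vectors of $\st{M}\left(\st{C}_{k}\right)$ and the model used is $\st{M}\left(\st{C}_{k}^{3}\right)$. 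As written, however, the admissibility of $\vc{w}_{\pm}$ is not a consequence of the stated hypothesis, so your proof has a genuine gap at that step.
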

\begin{proof}
We first the prove the lemma for unit-norm vectors $\vc{u}$ and $\vc{v}$.
Since $\supp\left(\vc{u}\pm\vc{v}\right)\cup\supp\left(\vcg{\uptheta}\right)\in\st{M}\left(\st{C}_{k}\right)$
we can use (\ref{eq:SRH}) for $\vcg{\Delta}=\vc{u}\pm\vc{v}$ to
obtain 
\begin{alignat*}{2}
\beta_{\st{C}_{k}}\norm{\vc{u}\pm\vc{v}}^{2} & \leq\left\langle \vc{u}\pm\vc{v},\nabla^{2}f\left(\vcg{\uptheta}\right)\left(\vc{u}\pm\vc{v}\right)\right\rangle  & \leq\alpha_{\st{C}_{k}}\norm{\vc{u}\pm\vc{v}}^{2}.
\end{alignat*}
These inequalities and the assumption $\norm{\vc{u}}=\norm{\vc{v}}=1$
then yield 
\begin{alignat*}{2}
\frac{\beta_{\st{C}_{k}}-\alpha_{\st{C}_{k}}}{2}+\frac{\alpha_{\st{C}_{k}}+\beta_{\st{C}_{k}}}{2}\left\langle \vc{u},\vc{v}\right\rangle  & \leq\left\langle \vc{u},\nabla^{2}f\left(\vcg{\uptheta}\right)\vc{v}\right\rangle  & \leq\frac{\alpha_{\st{C}_{k}}-\beta_{\st{C}_{k}}}{2}+\frac{\alpha_{\st{C}_{k}}+\beta_{\st{C}_{k}}}{2}\left\langle \vc{u},\vc{v}\right\rangle ,
\end{alignat*}
where we used the fact that $\nabla^{2}f\left(\vcg{\uptheta}\right)$
is symmetric since $f$ is twice continuously differentiable. Multiplying
all sides by $\eta$ and rearranging the terms then imply 
\begin{alignat}{1}
\eta\frac{\alpha_{\st{C}_{k}}-\beta_{\st{C}_{k}}}{2} & \geq\left|\left(\eta\frac{\alpha_{\st{C}_{k}}+\beta_{\st{C}_{k}}}{2}-1\right)\left\langle \vc{u},\vc{v}\right\rangle +\left\langle \vc{u},\vc{v}\right\rangle -\eta\left\langle \vc{u},\nabla^{2}f\left(\vcg{\uptheta}\right)\vc{v}\right\rangle \right|\nonumber \\
 & \geq\left|\left\langle \vc{u},\vc{v}\right\rangle -\eta\left\langle \vc{u},\nabla^{2}f\left(\vcg{\uptheta}\right)\vc{v}\right\rangle \right|-\left|\left(\eta\frac{\alpha_{\st{C}_{k}}+\beta_{\st{C}_{k}}}{2}-1\right)\left\langle \vc{u},\vc{v}\right\rangle \right|\nonumber \\
 & \geq\left|\left\langle \vc{u},\vc{v}\right\rangle -\eta\left\langle \vc{u},\nabla^{2}f\left(\vcg{\uptheta}\right)\vc{v}\right\rangle \right|-\left|\eta\frac{\alpha_{\st{C}_{k}}+\beta_{\st{C}_{k}}}{2}-1\right|,\label{eq:UnitNormIneq}
\end{alignat}
 which is equivalent to result for unit-norm $\vc{u}$ and $\vc{v}$
as desired. For the general case one can write $\vc{u}=\norm{\vc{u}}\vc{u}'$
and $\vc{v}=\norm{\vc{v}}\vc{v}'$ such that $\vc{u}'$ and $\vc{v}'$
are both unit-norm. It is straightforward to verify that using (\ref{eq:UnitNormIneq})
for $\vc{u'}$ and $\vc{v}'$ as the unit-norm vectors and multiplying
both sides of the resulting inequality by $\norm{\vc{u}}\norm{\vc{v}}$
yields the desired general case.
\end{proof}

\begin{proof}[\textbf{Proof of Theorem} \ref{thm:IterInvar}]
 Using optimality of $\vcg{\uptheta}\itr{i+1}$ and feasibility of
$\overline{\vcg{\uptheta}}$ one can deduce $\norm{\vcg{\uptheta}\itr{i+1}-\vcg{\upchi}\itr{i}}^{2}\leq\norm{\overline{\vcg{\uptheta}}-\vcg{\upchi}\itr{i}}^{2},$
with $\vcg{\upchi}\itr{i}$ as in line \ref{lin:Descent} of Algorithm
\ref{alg:GDMS}. Expanding the squared norms using the inner product
of $\st{H}$ then shows $0\leq\left\langle \vcg{\uptheta}\itr{i+1}-\overline{\vcg{\uptheta}},2\vcg{\upchi}\itr{i}\!-\!\vcg{\uptheta}\itr{i+1}-\overline{\vcg{\uptheta}}\right\rangle $
or equivalently $0\leq\left\langle \vcg{\Delta}\itr{i+1},2\vcg{\uptheta}\itr{i}\!-\!2\eta\itr{i}\nabla f\left(\overline{\vcg{\uptheta}}+\vcg{\Delta}\itr{i}\right)\!-\!\vcg{\Delta}\itr{i+1}\right\rangle ,$
where $\vcg{\Delta}\itr{i}=\vcg{\uptheta}\itr{i}\!-\!\overline{\vcg{\uptheta}}$
and $\vcg{\Delta}\itr{i+1}=\vcg{\uptheta}\itr{i+1}\!-\!\overline{\vcg{\uptheta}}$.
Adding and subtracting $2\eta\itr{i}\left\langle \vcg{\Delta}\itr{i+1},\nabla f\left(\overline{\vcg{\uptheta}}\right)\right\rangle $
and rearranging yields
\begin{alignat}{1}
\norm{\vcg{\Delta}\itr{i+1}}^{2} & \leq2\left\langle \vcg{\Delta}\itr{i+1},\vcg{\uptheta}\itr{i}\right\rangle -2\eta\itr{i}\left\langle \vcg{\Delta}\itr{i+1},\nabla f\left(\overline{\vcg{\uptheta}}+\vcg{\Delta}\itr{i}\right)-\nabla f\left(\overline{\vcg{\uptheta}}\right)\right\rangle \nonumber \\
 & -2\eta\itr{i}\left\langle \vcg{\Delta}\itr{i+1},\nabla f\left(\overline{\vcg{\uptheta}}\right)\right\rangle \label{eq:IterInvar}
\end{alignat}
 Since $f$ is twice continuously differentiable by assumption, it
follows form the mean-value theorem that $\left\langle \vcg{\Delta}\itr{i+1},\nabla f\left(\overline{\vcg{\uptheta}}+\vcg{\Delta}\itr{i}\right)-\nabla f\left(\overline{\vcg{\uptheta}}\right)\right\rangle =\left\langle \vcg{\Delta}\itr{i+1},\nabla^{2}f\left(\overline{\vcg{\uptheta}}+t\vcg{\Delta}\itr{i}\right)\vcg{\Delta}\itr{i}\right\rangle $,
for some $t\in\left(0,1\right)$. Furthermore, because $\overline{\vcg{\uptheta}}$,
$\vcg{\uptheta}\itr{i}$, $\vcg{\uptheta}\itr{i+1}$ all belong to
the model set $\st{M}\left(\st{C}_{k}\right)$ we have $\supp\left(\overline{\vcg{\uptheta}}+t\vcg{\Delta}\itr{i}\right)\in\st{M}\left(\st{C}_{k}^{2}\right)$
and thereby $\supp\left(\vcg{\Delta}\itr{i+1}\right)\cup\supp\left(\overline{\vcg{\uptheta}}+t\vcg{\Delta}\itr{i}\right)\in\st{M}\left(\st{C}_{k}^{3}\right)$.
Invoking the $\left(\alpha_{\st{C}_{k}^{3}},\beta_{\st{C}_{k}^{3}},r\right)$-SMRH
condition of the cost function and applying Lemma \ref{lem:BasicIneq}
with the sparsity model $\st{M}\left(\st{C}_{k}^{3}\right)$, $\vcg{\uptheta}=\overline{\vcg{\uptheta}}+t\vcg{\Delta}\itr{i}$,
and $\eta=\eta\itr{i}$ then yields 
\begin{alignat*}{1}
\left|\left\langle \vcg{\Delta}\itr{i+1},\vcg{\Delta}\itr{i}\right\rangle -\eta\itr{i}\left\langle \vcg{\Delta}\itr{i+1},\nabla f\left(\overline{\vcg{\uptheta}}+\vcg{\Delta}\itr{i}\right)-\nabla f\left(\overline{\vcg{\uptheta}}\right)\right\rangle \right| & \leq\gamma\itr{i}\norm{\vcg{\Delta}\itr{i+1}}\norm{\vcg{\Delta}\itr{i}}.
\end{alignat*}
 Using the Cauchy-Schwarz inequality and the fact that $\norm{\nabla_{\supp\left(\vcg{\Delta}\itr{i+1}\right)}f\left(\overline{\vcg{\uptheta}}\right)}\leq\norm{\nabla_{\overline{\st{I}}}f\left(\overline{\vcg{\uptheta}}\right)}$
by the definition of $\overline{\st{I}}$, (\ref{eq:IterInvar}) implies
that $\norm{\vcg{\Delta}\itr{i+1}}^{2}\leq2\gamma\itr{i}\norm{\vcg{\Delta}\itr{i+1}}\norm{\vcg{\Delta}\itr{i}}+2\eta\itr{i}\norm{\vcg{\Delta}\itr{i+1}}\norm{\nabla_{\overline{\st{I}}}f\left(\overline{\vcg{\uptheta}}\right)}$
. Canceling $\norm{\vcg{\Delta}\itr{i+1}}$ from both sides proves
the theorem.\end{proof}
\begin{lem}[Bounded Model Projection]
\label{lem:BoundProj}Given an arbitrary $\vc{h}_{0}\in\st{H}$,
a positive real number $r$, and a sparsity model generator $\st{C}_{k}$,
a projection $\mathrm{P}_{\st{C}_{k},r}\left[\vc{h}_{0}\right]$ can
be obtained as the projection of $\mathrm{P}_{\st{C}_{k},+\infty}\left[\vc{h}_{0}\right]$
on to the sphere of radius $r$.\end{lem}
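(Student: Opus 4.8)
The plan is to decouple the combinatorial choice of support from the Euclidean length constraint, and then to observe that the optimal support for problem~(\ref{eq:Proj}) with finite $r$ coincides with one for the problem with $r=+\infty$. Throughout, ``projection onto the sphere of radius $r$'' is understood in the usual sense of radial truncation: a point $\vc{v}$ is mapped to itself when $\norm{\vc{v}}\leq r$ and to $r\vc{v}/\norm{\vc{v}}$ otherwise. Since rescaling does not change the support, the truncated vector still has its support in $\st{M}\left(\st{C}_{k}\right)$, so feasibility for~(\ref{eq:Proj}) is preserved automatically.

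First I would fix a generator set $\st{S}\in\st{C}_{k}$ and solve the restricted subproblem of minimizing $\norm{\vcg{\uptheta}-\vc{h}_{0}}$ over all $\vcg{\uptheta}$ with $\supp\left(\vcg{\uptheta}\right)\subseteq\st{S}$ and $\norm{\vcg{\uptheta}}\leq r$. Because the coordinates outside $\st{S}$ are forced to zero, orthogonality of $\res{\vc{h}_{0}}_{\st{S}}$ and $\vc{h}_{0}-\res{\vc{h}_{0}}_{\st{S}}$ gives $\norm{\vcg{\uptheta}-\vc{h}_{0}}^{2}=\norm{\vcg{\uptheta}-\res{\vc{h}_{0}}_{\st{S}}}^{2}+\norm{\vc{h}_{0}}^{2}-\norm{\res{\vc{h}_{0}}_{\st{S}}}^{2}$, so the subproblem is exactly the Euclidean projection of $\res{\vc{h}_{0}}_{\st{S}}$ onto the radius-$r$ ball. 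Its minimizer is $\res{\vc{h}_{0}}_{\st{S}}$ when $\norm{\res{\vc{h}_{0}}_{\st{S}}}\leq r$ and $r\res{\vc{h}_{0}}_{\st{S}}/\norm{\res{\vc{h}_{0}}_{\st{S}}}$ otherwise, and in both cases the optimal value equals $\left(\norm{\res{\vc{h}_{0}}_{\st{S}}}-r\right)_{+}^{2}+\norm{\vc{h}_{0}}^{2}-\norm{\res{\vc{h}_{0}}_{\st{S}}}^{2}$.

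Next I would minimize this value over $\st{S}\in\st{C}_{k}$. Writing $s=\norm{\res{\vc{h}_{0}}_{\st{S}}}$, the value is $\varphi\left(s\right)=\left(s-r\right)_{+}^{2}-s^{2}+\norm{\vc{h}_{0}}^{2}$, which equals $\norm{\vc{h}_{0}}^{2}-s^{2}$ on $\left[0,r\right]$ and $\norm{\vc{h}_{0}}^{2}+r^{2}-2rs$ on $\left[r,+\infty\right)$; the two pieces agree at $s=r$ and each is strictly decreasing, so $\varphi$ is continuous and strictly decreasing on $\left[0,+\infty\right)$. Hence minimizing over $\st{S}$ is equivalent to maximizing $\norm{\res{\vc{h}_{0}}_{\st{S}}}$ over $\st{S}\in\st{C}_{k}$. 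Since for $r=+\infty$ truncation never occurs and the value above reduces to $\norm{\vc{h}_{0}}^{2}-\norm{\res{\vc{h}_{0}}_{\st{S}}}^{2}$, the $r=+\infty$ problem is governed by the \emph{same} maximization. Therefore there is a set $\st{S}^{\star}\in\st{C}_{k}$ optimal for both, with $\mathrm{P}_{\st{C}_{k},+\infty}\left[\vc{h}_{0}\right]=\res{\vc{h}_{0}}_{\st{S}^{\star}}$; combining this with the first step shows that $\mathrm{P}_{\st{C}_{k},r}\left[\vc{h}_{0}\right]$ may be taken to be the radial truncation of $\res{\vc{h}_{0}}_{\st{S}^{\star}}=\mathrm{P}_{\st{C}_{k},+\infty}\left[\vc{h}_{0}\right]$ to norm $r$, which is the claim.

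The only points requiring care are the behavior of $\varphi$ at the breakpoint $s=r$ — it is precisely the fact that $\varphi$ remains monotone there that allows the support selection to be separated from the norm constraint — and the possible non-uniqueness of the maximizer of $\norm{\res{\vc{h}_{0}}_{\st{S}}}$, which is why the statement speaks of \emph{a} projection: one must exhibit a single $\st{S}^{\star}$ serving the finite-$r$ and the $r=+\infty$ problems simultaneously, rather than reason about set-valued projection maps. Everything else is routine orthogonal-decomposition bookkeeping in the Hilbert space $\st{H}$.
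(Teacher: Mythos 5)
Your proof is correct and follows essentially the same route as the paper's: fix a support $\st{S}\in\st{C}_{k}$, use the orthogonal decomposition to see that the restricted minimizer is the radius-$r$ truncation of $\res{\vc{h}_{0}}_{\st{S}}$, and then show that the optimal support is one maximizing $\norm{\res{\vc{h}_{0}}_{\st{S}}}$, i.e., the support of $\mathrm{P}_{\st{C}_{k},+\infty}\left[\vc{h}_{0}\right]$. The only cosmetic difference is that you obtain the last step from strict monotonicity of $\varphi\left(s\right)=\left(s-r\right)_{+}^{2}-s^{2}+\norm{\vc{h}_{0}}^{2}$, whereas the paper reaches the same conclusion by a case analysis on whether $\norm{\res{\vc{h}_{0}}_{\st{S}}}$ exceeds $r$ using the equivalent objective $q\left(\st{S}\right)$.
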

\begin{proof}
To simplify the notation let $\widehat{\vc{h}}=\mathrm{P}_{\st{C}_{k},r}\left[\vc{h}_{0}\right]$
and $\widehat{\st{S}}=\supp\left(\widehat{\vc{h}}\right)$. For $\st{S}\subseteq\left[p\right]$
define 
\begin{alignat*}{1}
\vc{h}_{0}\left(\st{S}\right) & =\argmin_{\vc{h}}\ \norm{\vc{h}-\vc{h}_{0}}\quad\text{s.t. }\norm{\vc{h}}\leq r\text{ and }\supp\left(\vc{h}\right)\subseteq\st{S}.
\end{alignat*}
It follows from the definition of $\mathrm{P}_{\st{C}_{k},r}\left[\vc{h}_{0}\right]$
that $\widehat{\st{S}}\in\argmin_{\st{S}\in\st{C}_{k}}\ \norm{\vc{h}_{0}\left(\st{S}\right)-\vc{h}_{0}}$.
Using 
\begin{alignat*}{2}
\norm{\vc{h}_{0}\left(\st{S}\right)-\vc{h}_{0}}^{2} & =\norm{\vc{h}_{0}\left(\st{S}\right)-\res{\vc{h}_{0}}_{\st{S}}-\res{\vc{h}_{0}}_{\st{S}\cmpl}}^{2} & =\norm{\vc{h}_{0}\left(\st{S}\right)-\res{\vc{h}_{0}}_{\st{S}}}^{2}+\norm{\res{\vc{h}_{0}}_{\st{S}\cmpl}}^{2},
\end{alignat*}
 we deduce that $\vc{h}_{0}\left(\st{S}\right)$ is the projection
of $\res{\vc{h}_{0}}_{\st{S}}$ onto the sphere of radius $r$. Therefore,
we can write $\vc{h}_{0}\left(\st{S}\right)=\min\left\{ 1,r/\norm{\res{\vc{h}_{0}}_{\st{S}}}\right\} \res{\vc{h}_{0}}_{\st{S}}$
and from that 
\begin{alignat*}{1}
\widehat{\st{S}} & \in\argmin_{\st{S}\in\st{C}_{k}}\ \norm{\min\left\{ 1,r/\norm{\res{\vc{h}_{0}}_{\st{S}}}\right\} \res{\vc{h}_{0}}_{\st{S}}-\vc{h}_{0}}^{2}\\
 & =\argmin_{\st{S}\in\st{C}_{k}}\ \norm{\min\left\{ 0,r/\norm{\res{\vc{h}_{0}}_{\st{S}}}-1\right\} \res{\vc{h}_{0}}_{\st{S}}}^{2}+\norm{\res{\vc{h}_{0}}_{\st{S}\cmpl}}^{2}\\
 & =\argmin_{\st{S}\in\st{C}_{k}}\ \left(\left(1-r/\norm{\res{\vc{h}_{0}}_{\st{S}}}\right)_{+}^{2}-1\right)\norm{\res{\vc{h}_{0}}_{\st{S}}}^{2}\\
 & =\argmax_{\st{S}\in\st{C}_{k}}\ q\left(\st{S}\right):=\norm{\res{\vc{h}_{0}}_{\st{S}}}^{2}-\left(\norm{\res{\vc{h}_{0}}_{\st{S}}}-r\right)_{+}^{2}.
\end{alignat*}
Furthermore, let 
\begin{alignat}{1}
\st{S}_{0} & =\supp\left(\mathrm{\mathrm{P}_{\st{C}_{k},+\infty}\left[\vc{h}_{0}\right]}\right)=\argmax_{\st{S}\in\st{C}_{k}}\ \norm{\res{\vc{h}_{0}}_{\st{S}}}.\label{eq:SupportOpt}
\end{alignat}
If $\norm{\res{\vc{h}_{0}}_{\st{S}_{0}}}\leq r$ then $q\left(\st{S}\right)=\norm{\res{\vc{h}_{0}}_{\st{S}}}\leq q\left(\st{S}_{0}\right)$
for any $\st{S}\in\st{C}_{k}$ and thereby $\widehat{\st{S}}=\st{S}_{0}$.
Thus, we focus on cases that $\norm{\res{\vc{h}_{0}}_{\st{S}_{0}}}>r$
which implies $q\left(\st{S}_{0}\right)=2\norm{\res{\vc{h}_{0}}_{\st{S}_{0}}}r-r^{2}$.
For any $\st{S}\in\st{C}_{k}$ if $\norm{\res{\vc{h}_{0}}_{\st{S}}}\leq r$
we have $q\left(\st{S}\right)=\norm{\res{\vc{h}_{0}}_{\st{S}}}^{2}\leq r^{2}<2\norm{\res{\vc{h}_{0}}_{\st{S}_{0}}}r-r^{2}=q\left(\st{S}_{0}\right)$,
and if $\norm{\res{\vc{h}_{0}}_{\st{S}}}>r$ we have $q\left(\st{S}\right)=2\norm{\res{\vc{h}_{0}}_{\st{S}}}r-r^{2}\leq2\norm{\res{\vc{h}_{0}}_{\st{S}_{0}}}r-r^{2}=q\left(\st{S}_{0}\right)$
where (\ref{eq:SupportOpt}) is applied. Therefore, we have shown
that $\widehat{\st{S}}=\st{S}_{0}$. It is then straightforward to
show the desired result that projecting $\mathrm{P}_{\st{C}_{k},+\infty}\left[\vc{h}_{0}\right]$
onto the centered sphere of radius $r$ yields $\mathrm{P}_{\st{C}_{k},r}\left[\vc{h}_{0}\right]$.
\end{proof}
{\small{}\bibliographystyle{IEEEtran}
\bibliography{references}
}
\end{document}